\providecommand{\U}[1]{\protect\rule{.1in}{.1in}}
\def\Q{{\mathbb Q}}
\def\F{{\cal F}}
\newtheorem{theorem}{Theorem}[section]
\newtheorem{corollary}{Corollary}[section]
\newtheorem{lemma}{Lemma}[section]
\newtheorem{remark}{Remark}[section]
\newenvironment{proof}[0]{\paragraph{Proof.}}{\rule{0.5em}{0.5em}}
\begin{document}
\title{Generalization error property of infoGAN for two-layer neural network }

\maketitle
\begin{center}
	\bigskip
	
	\textbf{Mahmud Hasan}$^{1}$, \textbf{Mathias Muia}$^{2}$
	
	\medskip
	
	$^{1}$Department of Biostatistics, Virginia Commonwealth University, Richmond, VA, USA \\
	Email: \texttt{hasanm10@vcu.edu}
	
	\smallskip
	
	$^{2}$Department of Mathematics and Statistics, University of South Alabama, Mobile, AL, USA \\
	Email: \texttt{mnmuia@southalabama.edu}
	
	\bigskip
\end{center}

\begin{abstract}
Information Maximizing Generative Adversarial Network (infoGAN) can be understood as a minimax problem involving two neural networks: discriminators and generators with mutual information functions. The infoGAN incorporates various components, including latent variables, mutual information, and objective function. This research demonstrates  the Generalization error property of infoGAN as the discriminator and generator sample size approaches infinity. This research explores the generalization error property of InfoGAN as the sample sizes of the discriminator and generator approach infinity. To establish this property, the study considers the difference between the empirical and population versions of the objective function. The error bound is derived from the Rademacher complexity of the discriminator and generator function classes. Additionally, the bound is proven for a two-layer network, where both the discriminator and generator utilize Lipschitz and non-decreasing activation functions.
\end{abstract}

 \textbf{Keywords:} discriminator, infoGAN, generator, objective function, Rademacher complexity.
\section{Introduction}
InfoGAN, which stands for Information Maximizing Generative Adversarial Network \cite{Xi}, is an expansion of the conventional Generative Adversarial Network (GAN) framework \cite{I1}. InfoGAN's primary objective is to uncover and manage the structured representations inherent in the data it generates. In the realm of GANs, there exist various variants based on statistical properties, such as Conditional GAN (CGAN) as discussed in \cite{M}, the $f$-GAN as explored in \cite{S}, and Wasserstein GAN (WGAN). InfoGAN itself has also given rise to variants like Causal InfoGAN, as described in \cite{Y}, and Semi-Supervised InfoGAN (ss-InfoGAN) as detailed in \cite{T}.

 The InfoGAN has also similar applications like vanilla GAN such as data imaging, natural language processing, and medical images (\cite{S1}; \cite{J1}; \cite{X1}). A recent review on GAN and their applications would be helpful, as discussed in \cite{Ji}. Despite their empirical success, the theoretical foundations of GANs and infoGANs are not well established, and numerous issues related to their theory and training dynamics remain unresolved (\cite{S1}; \cite{T2}; \cite{Sha}). A key question in GANs research is their ability to generalize: how well can GANs approximate a target distribution using a limited number of samples. For instance, the author in \cite{S1} found that GANs fail to generalize under standard metrics, even with a polynomial number of samples, and they established generalization bounds based on neural net distance. The author in \cite{Zhang} further analyzed neural net distance, expanding on the findings in \cite{S1}. The authors in \cite{T2}  and \cite{Sha} approached the problem by analyzing the adversarial framework from a nonparametric density estimation perspective. 
However, it is important to note that existing results have shortcomings, and theoretical analysis of InfoGAN is still relatively rare in the literature.  A potential direction for theoretical investigation might involve evaluating the generalization error of InfoGAN's when a regularized parameter is applied, comparing the expected objective function to the empirical one, as discussed in more detail later.

  GANs differ from classical density estimation methods by implicitly learning the data distribution through an adversarial process between a generator and a discriminator. Define the generator is $G$ with the sample size $m$ and discriminator $D$ with the sample size $n$ that aims to distinguish between discriminator distribution $p_{x}$ and generator distribution $p_{z}$. Let $z$ be a noise variable that transforms by the generator distribution $p_{z}$ and real variable X.
Consider the GAN models with both the generator and discriminator function classes parameterized. 
The mini-max problem of GAN which is introduced in  \cite{I1} can be written as,
\begin{equation}\label{1}
d(D,G)=\min_{G}\max_{D}\left[\mathbb{E}_{p_{x}}[\log D(x)]+\mathbb{E}_{p_{z}}[1-\log D(G(z))]\right]. 
\end{equation}

The infoGAN provides the generator network divide noise variable $z$ into the incomprehensible noise $z$ and the latent code $c$, so the form of the generator becomes $G(z, c)$.
The info-GAN \cite{Xi} aims to solve
\begin{equation}\label{2}
d_{I}(D,G)=\min_{G}\max_{D}\left[\mathbb{E}_{p_{x}}[\log D(x)]+\mathbb{E}_{p_{z}}[1-\log D(G(z))]-\lambda I(c;G(z,c))\right], 
\end{equation}
where $I(c;G(z,c))=H(c)-H(c\vert G(z,c))$ is the mutual information and $\lambda$ is the regularization term.
However, optimizing the $I(c; G(z,c))$ is difficult since it requires the posterior distribution $P(c\vert x)$.

In this case, a lower bound $L_{I}(c; Q)$ is obtained for $I(c; G(z,c))$ by defining an auxiliary distribution $Q(c\vert x)$ to approximate $P(c\vert x)$.
Then the objective function of info-GAN \cite{Xi} written as
\begin{equation}\label{3}
d_{I}(D,G)=\min_{G}\max_{D}\left[\mathbb{E}_{p_{x}}[\log D(x)]+\mathbb{E}_{p_{z}}[1-\log D(G(z))]-\lambda L_{I}(c;Q)\right]. 
\end{equation}
While Equation \eqref{3} serves as the primary objective function commonly used in practical applications, this paper opts to consider and subsequently employ Equation \eqref{2} as the core objective function for its primary findings. This objective function introduces regularization in the generator variable, a departure from the majority of existing literature, which typically lacks such regularization. 

The existing theoretical research is only based on vanilla GAN error analysis defined by taking the difference of two objective functions like in \cite{T2}, \cite{Jia}, \cite{Kai}, \cite{Zhang}.  In this paper, the objective function \eqref{2} is used for generalization property for the infoGAN without latent variable $c$ for two two-layer networks. The objective function in \eqref{3} has a noise variable divided into incomprehensible and latent code $c$. But since the generator creates mostly fake data and the discriminator classifies it it might not be possible to have latent code. Besides, $\log x\rightarrow 0$ as $x\rightarrow 0$ which makes problems in practice. So develop a new objective function without latent code. The generalization is defined as the difference of the population version objective function and empirical objective function. The difference between the population version objective function and the empirical objective function is bounded by the Rademacher complexity. The Rademacher complexity bound was derived for the two-layer networks in the case of Lipschitz and the non-decreasing activation function.

 The major contributions of this paper and the  format of the paper can be summarized as follows:
 \begin{itemize}
\item Section 2 presents the derivation of a regularized objective function from infoGAN, excluding the latent code.
\item Section 3 demonstrates that the difference between the empirical and population objective functions is bounded by the Rademacher complexity of the discriminator, generator and their composition.
\item  Section 4, formulates the discriminator and generator classes for a two-layer network. The corresponding weight parameters of the network are constrained by constants.
\item Section 4, we derive upper bounds for the Rademacher complexities in two cases: $1$-Lipschitz and non-decreasing activation functions. These bounds are then applied to establish the bound of the objective function differences for both cases for discriminator and generator sample sizes.

\item Section 5 provides the conclusion and future research direction.
\end{itemize}

\section{Objective Function without Latent Code}
In the original infoGAN framework, instead of using a single unstructured noise vector $z$, the authors divided the input noise vector into two parts: incompressible noise denoted as $z$ and latent code denoted as $c$. The generator aims to continually update itself to confuse the discriminator. This suggests that the noise alone cannot produce the latent code $c$ initially. In some cases, this complexity can be reduced by assuming the absence of the latent variable, effectively setting $c$ to zero. In this scenario, Equation \eqref{2} becomes:
\begin{align}
d_{I}(D,G) &= \min_{G}\max_{D}\left[\mathbb{E}_{p_{x}}[\log D(x)]+\mathbb{E}_{p_{z}}[1-\log D(G(z))]-\lambda I(0;G(z,0))\right] \notag \\
&= \min_{G}\max_{D}\left[\mathbb{E}_{p_{x}}[\log D(x)]+\mathbb{E}_{p_{z}}[1-\log D(G(z))]-\lambda H(0)+\lambda H(G(z,0))\right] \notag \\
&= \min_{G}\max_{D}\left[\mathbb{E}_{p_{x}}[\log D(x)]+\mathbb{E}_{p_{z}}[1-\log D(G(z))]+\lambda H(0\vert G(z,0))\right] \notag \\
&= \min_{G}\max_{D}\left[\mathbb{E}_{p_{x}}[\log D(x)]+\mathbb{E}_{p_{z}}[1-\log D(G(z))]+\lambda H(G(z))\right] \notag \\
&= \min_{G}\max_{D}\left[\mathbb{E}_{p_{x}}[\log D(x)]+\mathbb{E}_{p_{z}}[1-\log D(G(z))]-\lambda \mathbb{E}_{p_{z}}\log [G(z)]\right]. \label{4}
\end{align}
Here, mutual information can be represented equivalently as: $I(0;G(z,0)) = H(0)-H(0\vert G(z,0))$, where $H$ denotes entropy. Equation \eqref{4} presents the objective function with generator regularization in the case where the latent code is zero. However, this can lead to issues in practice, as $\log x\rightarrow 0$ as $x\rightarrow 0$. By replacing $\log$ with a monotone function $\phi:[0, 1]\rightarrow \mathbb{R}$, the objective becomes:
\begin{align}
d_{I}(D,G) &= \min_{G}\max_{D}\left[\mathbb{E}_{p_{x}}[\phi D(x)]+\mathbb{E}_{p_{z}}[1-\phi D(G(z))]-\lambda \mathbb{E}_{p_{z}}\phi [G(z)]\right]. \label{5}    
\end{align}
Here, $\phi$ is the measuring function. This can also be written as \cite{S1}:
\begin{align}
d_{I}(D,G) &= \min_{G}\max_{D}\left[\mathbb{E}_{p_{x}}[\phi D(x)]+\mathbb{E}_{p_{z}}[1-\phi D(G(z))]-\lambda \mathbb{E}_{p_{z}}\phi [G(z)]-2\phi(1/2)\right]. \label{6}    
\end{align}
For $\phi(x)=x$, the final objective function with changing the notations becomes:
\begin{align}
d_{I}(D,G) &= \min_{G}\max_{D}\left[\mathbb{E}_{p_{x}}[ D(x)]-\mathbb{E}_{p_{z}}[D(G(z))]-\lambda \mathbb{E}_{p_{z}}G(z)\right].\label{7}   
\end{align}
Equation \eqref{7} represents the neural network distance with generator regularization. This equation can be directly applied to enforce regularization on either the discriminator or the generator. However, it's important to note that regularization is specifically relevant to the generator when there is no latent code involved. In other words, the regularized objective function is suitable when an unstructured noise variable is utilized as input in the generator neural network.
Suppose that $n$ is the independent and identical observations $X_i \sim p_{x}$, $1 \leq i \leq n$, and the generator produces $m$ independent and identical terms $G(z) \sim p_{z}$, $1 \leq j \leq m$.

We define the two empirical loss functions as follows, based on Equation \eqref{7}:
\begin{align}
d_{I}(\hat D,\hat G) &= \min_{G}\max_{D}\left[\frac{1}{n}\sum_{i=1}^{n} D(x_{i})-\frac{1}{n}\sum_{i=1}^{m}D(G(z_{j}))-\lambda \frac{1}{m}\sum_{i=1}^{n}G(z_{j})\right]. \label{8}   
\end{align}
and 
\begin{align}
d_{I}(\hat D, G) &= \min_{G}\max_{D}\left[\frac{1}{n}\sum_{i=1}^{n} D(x_{i})-\mathbb{E}_{p_{z}}[D(G(z))]-\lambda \mathbb{E}_{p_{z}}G(z)\right]. \label{9}   
\end{align}
The equation \eqref{8} refers to the empirical objective function for the discriminator and generator class and \eqref{9} refers to the empirical objective function for the discriminator class. Here, $D(G(z))=D\circ G$ is the composition of the discriminator and generator.

\section{ Bound of objective function difference}
The generalization bound of InfoGAN is defined by the difference between the empirical and population versions of the objective function, denoted by equations \eqref{7} and \eqref{8}. Considering $\hat D$ and $\hat G$ as empirical distributions of $D$ and $G$, respectively, the difference in the objective function can be represented as:
\begin{align}
    d_{I}(\hat D,\hat G)-d_{I}(D, G) \label{er1} \\
    d_{I}(\hat D,G)-d_{I}(D, G) \label{er2}
\end{align}
In \eqref{er1}, this indicates the difference between the empirical distributions of the discriminator and generator. Meanwhile, \eqref{er2} exclusively considers the discriminator. The subsequent two theorems establish bounds for \eqref{er1} and \eqref{er2}, assuming that both the discriminator $D$ and generator $G$ are uniformly bounded. The proofs for these theorems employ the Cauchy-Schwartz inequality and McDiarmid's inequality.

\begin{theorem}\label{T1}
Suppose the sets of discriminator functions $D$ and $G$ are symmetric with $\lVert f\rVert_{\infty}\leq\Q_{x}$ and $\lVert g\rVert_{\infty}\leq\Q_{Z}$. Then, for any $f\in D$, $g\in G$, with probability at least $1-2\delta$ over the random training sample, we have
\begin{align}
&d_{I}(\hat D,\hat G)-d_{I}(D, G)\le 2 {\mathcal{R}_n}(D) +2 \mathcal{R}_{mn}(D\circ G)-2 {\mathcal{R}_m}(G)\notag\\
    &+ 2Q_{x} \sqrt{\frac{\log(1/\delta)}{2n}} -2Q_{z}(1+\lambda)\sqrt{\frac{\log(1/\delta)}{2m}} \label{E1}
\end{align}
and 
\begin{align}
d_{I}(\hat D, G)-d_{I}(D, G)\le 2 {\mathcal{R}_n}(D)+2Q_{x}\sqrt{\frac{\log(1/\delta)}{2n}} \label{E2} 
\end{align}
\end{theorem}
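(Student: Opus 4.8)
The plan is to turn each difference of nested $\min$--$\max$ problems into a single uniform deviation between empirical averages and their expectations, and then to control that deviation with McDiarmid's inequality and the symmetrization (Rademacher) lemma, exactly as the passage anticipates.

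First I would introduce the population and empirical integrands $\Phi(f,g)=\mathbb{E}_{p_x}[f(x)]-\mathbb{E}_{p_z}[f(g(z))]-\lambda\,\mathbb{E}_{p_z}[g(z)]$ and $\hat\Phi(f,g)=\frac1n\sum_{i=1}^{n}f(x_i)-\frac1m\sum_{j=1}^{m}f(g(z_j))-\frac{\lambda}{m}\sum_{j=1}^{m}g(z_j)$, so that $d_I(D,G)=\min_g\max_f\Phi(f,g)$ and $d_I(\hat D,\hat G)=\min_g\max_f\hat\Phi(f,g)$. Applying the elementary inequalities $\min_g A(g)-\min_g B(g)\le\sup_g[A(g)-B(g)]$ to the outer minimum and $\max_f A(f)-\max_f B(f)\le\sup_f[A(f)-B(f)]$ to the inner maximum, the optimizers cancel and I obtain
\[
d_I(\hat D,\hat G)-d_I(D,G)\ \le\ \sup_{f\in D,\ g\in G}\big[\hat\Phi(f,g)-\Phi(f,g)\big].
\]

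Next I would expand $\hat\Phi-\Phi$ into a real-sample discriminator deviation $\frac1n\sum f(x_i)-\mathbb{E}_{p_x}[f(x)]$, a generated-sample composition deviation $\mathbb{E}_{p_z}[f(g(z))]-\frac1m\sum f(g(z_j))$, and a $\lambda$-weighted generator deviation $\mathbb{E}_{p_z}[g(z)]-\frac1m\sum g(z_j)$, and split the supremum by subadditivity. The first supremum depends only on $x_1,\dots,x_n$; because $\lVert f\rVert_\infty\le Q_x$, changing one coordinate moves it by at most $2Q_x/n$, so McDiarmid's inequality concentrates it around its mean with deviation $2Q_x\sqrt{\log(1/\delta)/(2n)}$ at confidence $1-\delta$, and the symmetrization lemma bounds its mean by $2\mathcal{R}_n(D)$. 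The remaining two suprema depend only on $z_1,\dots,z_m$; using $\lVert g\rVert_\infty\le Q_z$ their combined bounded-differences increment is of order $(1+\lambda)Q_z/m$, so a single McDiarmid application yields the $m$-rate term carrying the factor $2Q_z(1+\lambda)\sqrt{\log(1/\delta)/(2m)}$, while symmetrization produces the composition term $2\mathcal{R}_{mn}(D\circ G)$ and a generator term proportional to $\mathcal{R}_m(G)$. A union bound over the real-sample event and the single combined generated-sample event gives overall confidence $1-2\delta$, and collecting the pieces yields \eqref{E1}; the Cauchy--Schwarz inequality is used in bounding the averaged deviations that feed these estimates.

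Finally, for \eqref{E2} the generator law is left at its population value, so in $\hat\Phi(f,g)-\Phi(f,g)$ the composition and regularization deviations cancel identically and only the real-sample discriminator deviation survives; repeating the $\min$--$\max$ reduction and a single McDiarmid-plus-symmetrization pass on that lone term gives \eqref{E2} directly. The main obstacle I anticipate is not any single estimate but the sign and weight bookkeeping on the two generated-sample deviations: since they enter as population-minus-empirical rather than empirical-minus-population, one must track carefully the orientation with which their Rademacher and concentration contributions appear, verify that the composed class $D\circ G$ admits a usable uniform bound and bounded-differences constant, and confirm that the two McDiarmid events aggregate to exactly the stated $1-2\delta$ confidence with the signs displayed in \eqref{E1}.
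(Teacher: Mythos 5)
Your reduction and per-term machinery are essentially the paper's own route: the paper likewise splits the min--max difference into three uniform deviations (discriminator on the $x$-sample, composition on the $z$-sample, $\lambda$-weighted generator regularization) and then invokes a packaged McDiarmid-plus-symmetrization bound (Theorem 3.1 of the cited Zhang et al.) on each piece, which is exactly your ``McDiarmid plus Rademacher lemma'' pass; your treatment of \eqref{E2} also coincides with the paper's, since only the $x$-deviation survives there.

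There is, however, a genuine gap at your final step ``collecting the pieces yields \eqref{E1}'', and it sits precisely at the sign bookkeeping you yourself flag as the main obstacle. In your (correct) decomposition, the composition and regularization deviations enter as population-minus-empirical; after splitting the supremum, each of the three pieces is a uniform deviation that you bound \emph{from above}, so the collection can only produce a sum of positively-signed terms, of the form
\begin{align*}
2\mathcal{R}_n(D)+2\mathcal{R}_{mn}(D\circ G)+2\lambda\mathcal{R}_m(G)
+2Q_{x}\sqrt{\tfrac{\log(1/\delta)}{2n}}+2Q_{z}(1+\lambda)\sqrt{\tfrac{\log(1/\delta)}{2m}},
\end{align*}
with the generator complexity weighted by $\lambda$ and every concentration term positive. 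That is not \eqref{E1}: the stated bound carries $-2\mathcal{R}_m(G)$ (with no $\lambda$) and $-2Q_{z}(1+\lambda)\sqrt{\log(1/\delta)/(2m)}$. Those negative signs in the paper arise from the step
$\sup_{D}[X-Y]\le\sup_{D}X-\sup_{D}Y$, which is false in general (the valid inequality runs the other way: $\sup X-\sup Y\le\sup[X-Y]$), compounded by substituting \emph{upper} bounds into quantities that enter with a minus sign. So your plan, carried out honestly, proves a correct bound that is different from the stated one; no amount of orientation-tracking will make the uniform-deviation pieces contribute negatively, and you should not assert that they collect to \eqref{E1}. A secondary issue, which you only partially flag: the composition deviation involves $f\circ g$ with $\lVert f\rVert_{\infty}\le Q_{x}$, so its envelope and bounded-differences constant are governed by $Q_{x}$ (or by $\lVert f\circ g\rVert_{\infty}$), not by $Q_{z}$ as in your combined increment $(1+\lambda)Q_{z}/m$ (the paper's intermediate bound has the same flaw).
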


\begin{proof}
To prove Theorem \ref{T1}, the supremum properties are utilized:
\begin{align}
    &d_{I}(\hat D,\hat G)-d_{I}(D, G) \notag \\
    &= \sup_{D}\left[\frac{1}{n}\sum_{i=1}^{n} D(x_{i})-\frac{1}{n}\sum_{i=1}^{m}D(G(z_{j}))-\lambda \frac{1}{m}\sum_{i=1}^{n}G(z_{j})\right] \notag \\
    &\quad-\sup_{D}\left[\mathbb{E}_{p_{x}} D(x)-\mathbb{E}_{p_{z}} D(G(z))-\lambda \mathbb{E}_{p_{z}} G(z)\right]\notag \\
    &\le \sup_{D}\left[\frac{1}{n}\sum_{i=1}^{n} D(x_{i})-\mathbb{E}_{p_{x}}D(x)\right] \notag \\
    &\quad-\sup_{D}\left[\frac{1}{n}\sum_{i=1}^{m}D(G(z_{j}))+\lambda \frac{1}{m}\sum_{i=1}^{n}G(z_{j})-\mathbb{E}_{p_{z}} D(G(z))-\lambda \mathbb{E}_{p_{z}} G(z) \right]\notag \\
    &\le \sup_{D}\left[\frac{1}{n}\sum_{i=1}^{n} D(x_{i})-\mathbb{E}_{p_{x}}D(x)\right] \notag \\
    &\quad-\sup_{D}\left[\frac{1}{n}\sum_{i=1}^{m}D(G(z_{j}))-\mathbb{E}_{p_{z}} D(G(z))\right] \notag \\
    &\quad-\lambda\left[\frac{1}{m}\sum_{i=1}^{n}G(z_{j})-\mathbb{E}_{p_{z}} G(z) \right].\label{T1.}
\end{align} 

The bounds of the following can be proved using Theorem 3.1 in \cite{Zhang}:
\begin{align}\label{T11}
    \sup_{D}\left[\frac{1}{n}\sum_{i=1}^{n} D(x_{i})-\mathbb{E}_{p_{x}}D(x)\right]\le 2 {\mathcal{R}_n}(D)+ 2Q_{x} \sqrt{\frac{\log(1/\delta)}{2n}}. 
\end{align}

\begin{align}\label{T12}
    \sup_{D}\left[\frac{1}{n}\sum_{i=1}^{m}D(G(z_{j}))-\mathbb{E}_{p_{z}} D(G(z))\right]\le 2\mathcal{R}_{mn}(D\circ G)+2Q_{z} \sqrt{\frac{\log(1/\delta)}{2m}}.
\end{align}
\begin{align}\label{T13}
    \left[\frac{1}{m}\sum_{i=1}^{n}G(z_{j})-\mathbb{E}_{p_{z}} G(z) \right]\le 2 {\mathcal{R}_m}(D)+ 2Q_{z} \sqrt{\frac{\log(1/\delta)}{2m}}. 
\end{align}
Substituting \eqref{T11}, \eqref{T12}, and \eqref{T13} into \eqref{T1.}, the bound of \eqref{E1} is proved.  
Similarly, the bound of \eqref{E2} can be proved using \eqref{T11}. 
\end{proof}

\begin{remark}
	The generalization bound presented in Theorem \ref{T1} provides insights into the difference between the empirical and population versions of the objective function in the context of the InfoGAN model. The bound involves several key terms that describe the error between the learned discriminator and generator functions, and their true counterparts.
	
	- The first term \( 2 \mathcal{R}_n(D) \) accounts for the Rademacher complexity of the discriminator functions, which quantifies the ability of the discriminator to fit random noise. This term reflects the complexity of the hypothesis class \( D \) and contributes to the bound on the generalization error.
	
	- The second term \( 2 \mathcal{R}_{mn}(D \circ G) \) accounts for the Rademacher complexity of the composition of the discriminator and generator, which quantifies how well the combined discriminator and generator can adapt to random noise in both the input space and the latent space.
	
	- The third term \( -2 \mathcal{R}_m(G) \) reflects the Rademacher complexity of the generator function, which quantifies the complexity of the generator in generating realistic data from random noise.
	
	- The remaining terms \( 2 Q_{x} \sqrt{\frac{\log(1/\delta)}{2n}} \) and \( -2 Q_{z}(1 + \lambda)\sqrt{\frac{\log(1/\delta)}{2m}} \) are related to the finite sample effects and the size of the training set. Specifically, the terms depend on the number of training samples \( n \) and \( m \) for the discriminator and generator, respectively, and the regularization parameter \( \lambda \).
	
	In essence, the theorem provides an upper bound on the difference between the empirical and population objectives, suggesting that as the sample sizes increase (i.e., as \( n \) and \( m \) grow), the error between the learned and true functions diminishes. The bound also highlights the interplay between the complexity of the discriminator and generator functions, the size of the training data, and the mutual information terms in the InfoGAN framework.
\end{remark}

\section{Application in a Two-Layer Network}

The derived bounds in Theorem \ref{T1} provide valuable insights when applying the infoGAN framework in \eqref{7} to a two-layer neural network architecture. In this section, we discuss how these bounds can be useful in analyzing and improving the performance of such networks. The goal is to minimize the objective function disparity between the empirical distributions of $\hat D$ and $\hat G$, as well as the objective function difference between $\hat D$ and $G$.
The derived bounds, as shown in equations \eqref{E1} and \eqref{E2}, provide upper limits on the disparity and difference in the objective functions, respectively. These bounds allow us to assess the potential deviation between the empirical and true objective functions. Furthermore, the analysis of these bounds offers insights into the convergence behavior of the two-layer network. In this section, we will focus solely on the theoretical framework of two-layer neural networks. The applications of two layer neural network for the readers can be found in the recent papers by \cite{wang} and \cite{Nian}.

\subsection{Formation of Two-Layer Network}

A two-layer neural network consists of two layers of neurons or nodes: an input layer and an output layer. In this section, we describe the structure of a two-layer network for both the discriminator and generator classes, based on the work in \cite{Pc} and \cite{M2}.

Let us consider a two-layer network for both the discriminator and generator. In this network, the first layer units compute arbitrary functions from a given set, and the weight parameters for the first and second layers are denoted by vectors $v_{i}$ and $w_{i}$, respectively.

We define the class of discriminator functions as follows. Let $D_{1}$ represent the class of functions that map inputs to values in the interval $[0,1]$. Each function in $D_{1}$ is of the form:
\begin{align}
	D_{1} &= \left\{x \rightarrow s_{1}\left(\sum_{i=1}^{n}v_{i}x_{i} + v_{0}\right) : v_{i}\in\mathbb{R}, x\in[0,1]^{n}, \sum_{i=0}^{n} \lvert v_{i}\rvert \leq V \right\}
\end{align}
Here, $v_{i}$ are the weight parameters for the first layer, and the activation function $s_{1}$ is applied to the weighted sum of inputs $x_{i}$, where $x \in [0,1]^n$. The parameter $V$ bounds the sum of the absolute values of the weight parameters.

A broader class of discriminator functions, denoted $D$, is defined by extending the class $D_{1}$. Specifically, $D$ is the set of linear combinations of functions from $D_{1}$, with weight parameters $w_{i}$ for the second layer. The class $D$ is expressed as:
\begin{align}\label{D}
	D &= \left\{ \sum_{i=1}^{l}w_{i}f_{i} + w_{0} : l\in\mathbb{N}, f_{i} \in D_{1}, \lvert w_{i}\rvert \leq V \right\}
\end{align}
In this case, $f_{i}$ are functions from the class $D_{1}$, and the weight parameters $w_{i}$ satisfy the condition $\lvert w_{i}\rvert \leq V$. The index $l$ represents the number of functions in the linear combination, and $w_0$ is a bias term.

Similarly, we define the class of generator functions. Let $G_{1}$ represent the class of functions that map inputs to values in the interval $[0,1]$. Each function in $G_{1}$ is of the form:
\begin{align}
	G_{1} &= \left\{x \rightarrow s_{2}\left(\sum_{j=1}^{m}p_{j}z_{j} + p_{0}\right) : p_{j}\in\mathbb{R}, z\in[0,1]^{m}, \sum_{j=0}^{m} \lvert p_{j}\rvert \leq V \right\}
\end{align}
Here, $p_{j}$ are the weight parameters for the first layer of the generator, and the activation function $s_{2}$ is applied to the weighted sum of inputs $z_{j}$, where $z \in [0,1]^m$. The parameter $V$ again bounds the sum of the absolute values of the weight parameters.

A broader class of generator functions, denoted $G$, is defined by extending the class $G_{1}$. Specifically, $G$ is the set of linear combinations of functions from $G_{1}$, with weight parameters $r_{j}$ for the second layer. The class $G$ is expressed as:
\begin{align}\label{G}
	G &= \left\{ \sum_{j=1}^{k}r_{j}g_{j} + r_{0} : k\in\mathbb{N}, g_{j} \in G_{1}, \lvert r_{j}\rvert \leq V \right\}
\end{align}
Here, $g_{j}$ are functions from the class $G_{1}$, and the weight parameters $r_{j}$ satisfy the condition $\lvert r_{j}\rvert \leq V$. The index $k$ represents the number of functions in the linear combination, and $r_0$ is a bias term.

The following assumptions are considered in the analysis:
\begin{itemize}
	\item The classes $D_{1}$ and $G_{1}$ are even, meaning they include symmetric functions.
	\item Both $D_{1}$ and $G_{1}$ contain the identically zero function, and the covering number $\mathcal{N}(\epsilon, F_{1}, \lVert \cdot \rVert)$ is finite.
	\item The activation functions $s_{1}$ and $s_{2}$ satisfy the Lipschitz property.
	\item The activation functions $s_{1}$ and $s_{2}$ are non-decreasing.
\end{itemize}

Under these assumptions, we evaluate the upper bound for the disparity defined in equations \eqref{E1} and \eqref{E2}. The utilization of the two-layer network architecture is defined in equations \eqref{D} and \eqref{G} for both the discriminator and generator, considering the Lipschitz and non-decreasing activation functions. The derivation of the Rademacher complexity $\mathbb{R}_{n}(D)$ and the composition of the Rademacher complexity $\mathbb{R}_{mn}(D \circ G)$ for the case of Lipschitz and non-decreasing activation functions from the two-layer network.

In the subsequent section, the paper extends the analysis to the case of Lipschitz and non-decreasing activation functions for the above two layer network and derives corresponding bounds.

\subsection{ Bound for Lipschitz Activation Function}
This section derives the Rademacher bound for a two-layer network's discriminator class 
D, assuming the activation function is Lipschitz continuous. 
The Rademacher complexity of the function class $D$ with respect to the probability distribution $P$ for an i.i.d. sample $S = (x_{1}, x_{2}, \ldots, x_{n})$ of size $n$ is defined as follows:
\begin{align}
	\mathcal{R}_{n}(D) = \mathbb{E}\left[\sup_{f \in D} \frac{2}{n} \sum_{i=1}^{n} \tau_{i} f(X_{i})\right]
\end{align}
Here, the expectation is taken with respect to $X_{i}$ drawn from the probability distribution $P_{x}$, and $\tau_{i}$ represents the Rademacher variable such that $\text{Prob}(\tau_{i} = 1) = \text{Prob}(\tau_{i} = -1) = \frac{1}{2}$.

\begin{lemma}\label{lemma1}
	Suppose $s_{1}:\mathbb{R}\rightarrow [0,1]$ is $1$-Lipschitz continuous. Then, for the discriminator class defined in \eqref{D}, the Rademacher complexity is bounded as follows:
	\begin{align}
		\mathcal{R}_{n}(D) \le \frac{4V^2\sqrt{2\ln|D|}}{n}.
	\end{align}
\end{lemma}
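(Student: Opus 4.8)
The plan is to treat $D$ as an (effectively) finite class, apply the standard finite-class (Massart) bound to the Rademacher average, and control the resulting norm factor by a uniform sup-norm bound on $D$ obtained by propagating the $\ell_1$ weight constraints through the two layers. Concretely, I expect the estimate to take the shape $\mathcal{R}_n(D)\le \tfrac{2}{n}\bigl(\max_{f\in D}\lVert f\rVert_2\bigr)\sqrt{2\ln|D|}$, with $\max_{f\in D}\lVert f\rVert_2$ bounded by $2V^2$, which immediately yields the stated $\tfrac{4V^2\sqrt{2\ln|D|}}{n}$.

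First I would establish the uniform bound $\lVert f\rVert_\infty\le 2V^2$ for every $f\in D$. For an inner unit $f_i\in D_1$, since $x\in[0,1]^n$ the pre-activation obeys $\bigl|\sum_{i=1}^n v_i x_i + v_0\bigr|\le \sum_{i=0}^n \lvert v_i\rvert \le V$; using that $s_1$ is $1$-Lipschitz together with the standing assumption that $D_1$ contains the identically zero function (so that $s_1(0)=0$), this gives $\lvert f_i(x)\rvert\le V$. Feeding this into the second layer, any $f=\sum w_i f_i + w_0$ with $\lvert w_i\rvert\le V$ then satisfies $\lvert f(x)\rvert \le V\cdot V + V\le 2V^2$, so $\lVert f\rVert_\infty\le 2V^2$ on the sample.

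Next I would apply the finite-class bound to the Rademacher average appearing in the definition of $\mathcal{R}_n(D)$. Viewing the sample evaluations $(f(X_1),\dots,f(X_n))$ as vectors in $\R^n$ and using the finiteness of the covering number $\mathcal{N}(\epsilon,D_1,\lVert\cdot\rVert)$ assumed in Section~4.1 to reduce the supremum over $D$ to a supremum over a finite set of size $|D|$, Massart's lemma gives $\mathcal{R}_n(D)=\tfrac{2}{n}\,\E\bigl[\sup_{f}\sum_{i=1}^n \tau_i f(X_i)\bigr]\le \tfrac{2}{n}\bigl(\max_f\lVert f\rVert_2\bigr)\sqrt{2\ln|D|}$. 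Substituting the sup-norm bound to control $\max_f\lVert f\rVert_2$ by $2V^2$ then closes the argument.

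The hard part will be twofold. First, as written the class $D$ permits an unbounded number $l$ of hidden units each with $\lvert w_i\rvert\le V$, so the bound $\lVert f\rVert_\infty\le 2V^2$ implicitly presumes a single effective second-layer unit (or an extra normalization on the number/scale of the $w_i$); making the boundedness step rigorous requires pinning down this constraint precisely. Second, and more delicate, is the control of the empirical norm factor $\max_f\lVert f\rVert_2=\max_f\sqrt{\sum_{i=1}^n f(X_i)^2}$: bounding it by the sup-norm constant $2V^2$ rather than $\sqrt{n}\cdot 2V^2$ is exactly what produces the $1/n$ rate in the statement, so this reduction from the $\ell_2$ norm over the sample to the pointwise bound is the step that must be justified with the most care.
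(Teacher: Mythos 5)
Your route is genuinely different from the paper's: the paper never forms a sup-norm bound on the whole class and then invokes Massart once, but instead peels the network layer by layer --- pulling the second-layer weight bound $V$ out via an $\ell_1$-type H\"older step, removing the activation $s_1$ by its $1$-Lipschitz property (a contraction step), pulling out the first-layer bound $V$, and only then applying a Massart-type bound at the level of $\E\bigl[\sup_f \sum_{i=1}^n \tau_i X_i\bigr]$. Your direct finite-class argument is cleaner in structure, and your preliminary sup-norm bound would be a legitimate ingredient, modulo the issue you correctly flag: in \eqref{D} the constraint is $\lvert w_i\rvert\le V$ per weight, not $\sum_i\lvert w_i\rvert\le V$, so with $l$ unbounded $\lVert f\rVert_\infty$ grows like $lV$ and the claimed $2V^2$ bound presumes $l=1$ or an $\ell_1$ constraint on the second layer.

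However, the step you single out as ``delicate'' is not merely delicate --- it is false, and it is exactly where the proof fails. Massart's lemma bounds $\E\bigl[\sup_f \sum_{i=1}^n \tau_i f(X_i)\bigr]$ by $\max_f \lVert (f(X_1),\dots,f(X_n))\rVert_2\,\sqrt{2\ln\lvert D\rvert}$, and the empirical $\ell_2$ norm of a function with $\lVert f\rVert_\infty\le 2V^2$ is at most $2V^2\sqrt{n}$; this is tight (take $f$ near-constant on the sample), so there is no way to replace the $\sqrt{n}$ by a constant. Carried out honestly, your plan yields $\mathcal{R}_n(D)\le \tfrac{4V^2\sqrt{2\ln\lvert D\rvert}}{\sqrt{n}}$, a $1/\sqrt{n}$ rate, not the stated $1/n$. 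For what it is worth, the paper's own proof shares this defect: its final line applies the finite-class bound to $\tfrac{4V^2}{n}\E\bigl[\sup_f\sum_{i=1}^n \tau_i X_i\bigr]$ while silently treating the norm factor $\lVert(X_1,\dots,X_n)\rVert_2\le\sqrt{n}$ as $O(1)$ (and the quantity $\sum_i \tau_i X_i$ does not even depend on $f$, so its expected supremum is in fact $0$). So the gap you identified cannot be closed by your route or by the paper's; the $1/n$ rate in the lemma does not follow from Massart-type arguments under the stated hypotheses.
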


\begin{proof}
The Rademacher complexity bound for the discriminator class $D$ uses Jensen's inequality and Massart's finite lemma to prove the lemma. The weight parameters for both the discriminator and generator are bounded by $V$. Then,
\begin{align*}
    \mathcal{R}_{n}(D) &= \frac{2}{n}\mathbb{E}\left[ \sup_{f\in\F}\sum_{i=1}^{n}\tau_{i}f(X_{i})\right] \\
    &= \frac{2}{n}\mathbb{E}\left[ \sup_{f\in\F}\sum_{i=1}^{n}\tau_{i}\sum_{i}^{l}w_{i}s\left(\sum_{i=1}^{n}v_{i}X_{i}+v_{0}\right)\right] \\
    &\leq \frac{2V}{n}\mathbb{E}\left[ \sup_{f\in\F}\left\Vert\sum_{i=1}^{n}\tau_{i}s\left(\sum_{i=1}^{n}v_{i}X_{i}+v_{0}\right)\right\Vert_{1} \right] \\
    &\leq \frac{2V}{n}\mathbb{E}\left[ \sup_{f\in\F}\left\Vert\sum_{i=1}^{n}\tau_{i}s\left(\sum_{i=1}^{n}v_{i}X_{i}+v_{0}\right)\right\Vert_{\infty} \right] \\
    &= \frac{2V}{n}\mathbb{E}\left[ \sup_{f\in\F}\max_{1\leq i \leq n} \left| \sum_{i=1}^{n}\tau_{i}s\left(\sum_{i=1}^{n}v_{i}X_{i}+v_{0}\right)\right| \right] \\
    &= \frac{2V}{n}\mathbb{E}\left[ \sup_{f\in\F}\left| \sum_{i=1}^{n}\tau_{i}s\left(\sum_{i=1}^{n}v_{i}X_{i}+v_{0}\right)\right| \right] \\
    &\leq \frac{4V}{n}\mathbb{E}\left[\sup_{f\in\F} \sum_{i=1}^{n}\tau_{i}s\left(\sum_{i=1}^{n}v_{i}X_{i}+v_{0}\right) \right] \\
    &\leq \frac{4V}{n}\mathbb{E}\left[ \sup_{f\in\F}\sum_{i=1}^{n}\tau_{i}\left(\sum_{i=1}^{n}v_{i}X_{i}+v_{0}\right) \right] \\
    &\leq \frac{4V}{n}\mathbb{E}\left[ \sup_{f\in\F}\sum_{i=1}^{n}\tau_{i}\left(\sum_{i=1}^{n}v_{i}X_{i}\right) \right]+ \frac{4V}{n}\mathbb{E}\left[ \sup_{f\in\F}\sum_{i=1}^{n}\tau_{i}v_{0} \right] \\
    &\leq \frac{4V^2}{n}\mathbb{E}\left[ \sup_{f\in\F}\sum_{i=1}^{n}\tau_{i}X_{i} \right] \\
    &\leq \frac{4V^2\sqrt{2\ln|D|}}{n}
\end{align*}
\end{proof}

The result in Lemma~\ref{lemma1} provides an upper bound on the Rademacher complexity of the discriminator class $D$. This bound depends on three key factors: the Lipschitz continuity of the activation function $s_{1}$, the bound on the weight parameters $V$, and the size of the discriminator class $|D|$. The inequality shows that the complexity decreases with the sample size $n$, implying that the capacity of the discriminator to fit random noise becomes smaller as the sample size grows. 

We have another result for the similar case of $s_1$ lipschitz case as
\begin{lemma}
Suppose $s_1:\mathbb{R}\rightarrow [0,1]$ is $1$-Lipschitz continuous. For $V\geq1$, and let $D$ be defined as in \eqref{D}. Then for $\epsilon\leq V$, then
\begin{align*}
    \mathcal{R}_{n}(D) \le \frac{C_{1}V^3\log(2n+2)}{\sqrt{n}}.
\end{align*}
\end{lemma}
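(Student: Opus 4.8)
The plan is to replace the crude Massart-over-all-of-$D$ estimate behind Lemma~\ref{lemma1} (which decays only like $1/n$ and hides the structure in $\ln|D|$) by a chaining argument built on Dudley's entropy integral, which both produces the faster $1/\sqrt{n}$ rate and exhibits the explicit $\log(2n+2)$ factor. Writing $P_n$ for the empirical measure on $S=(X_1,\dots,X_n)$, I would start from the chaining bound $\mathcal{R}_n(D)\le \tfrac{C}{\sqrt{n}}\int_0^{V}\sqrt{\log \mathcal{N}(\epsilon,D,\|\cdot\|_{L_2(P_n)})}\,d\epsilon$, where the upper limit is the $L_2(P_n)$-diameter of $D$, which is $O(V)$ under the convex-hull structure of the second layer; this is exactly why the statement is phrased for $\epsilon\le V$. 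Everything then reduces to estimating the covering numbers of the two-layer class $D$ from \eqref{D} by peeling off one layer at a time.

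The steps, in order, are as follows. First, peel the second layer: for $f=\sum_i w_i g_i+w_0\in D$ with $|w_i|\le V$ and $g_i\in D_1$, use positive homogeneity together with the convex-hull invariance of Rademacher complexity to bound $\mathcal{R}_n(D)\le V\,\mathcal{R}_n(D_1)$, the bias $w_0$ contributing nothing since $\mathbb{E}\,\tau_i=0$. Second, apply contraction: since $D_1=\{s_1\circ a:a\in A\}$ with $A=\{x\mapsto\langle v,x\rangle+v_0:\sum_{i=0}^n|v_i|\le V\}$ and $s_1$ is $1$-Lipschitz, Talagrand's contraction lemma gives $\mathcal{R}_n(D_1)\le \mathcal{R}_n(A)$, so it suffices to control the affine base class. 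Third, use vertices and Massart: $A$ is the scaled $\ell_1$-ball of radius $V$ in $\mathbb{R}^{n+1}$, whose $2(n+1)=2n+2$ extreme points are the functions $x\mapsto\pm V x_j$ and $x\mapsto\pm V$, each of $L_2(P_n)$-norm at most $V$ because $x\in[0,1]^n$; applying Massart's finite-class lemma to these $2n+2$ vertices is the source of the $\log(2n+2)$ factor, and a Maurey-type sparsification of their convex hull yields a covering estimate of the form $\log\mathcal{N}(\epsilon,A,\|\cdot\|_{L_2(P_n)})\lesssim (V^2/\epsilon^2)\log(2n+2)$. Finally, assemble: feed this into the truncated Dudley integral, integrate from an optimized cutoff up to the diameter $V$, and multiply by the factor $V$ from the first step; collecting the three powers of $V$ (second layer, first-layer $\ell_1$ radius, and the range over which we integrate) together with the $\log(2n+2)$ term gives the claimed $C_1V^3\log(2n+2)/\sqrt{n}$.

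The delicate point is the bookkeeping of the weight bound $V$ across the two layers together with the behaviour of the number of hidden units $l$, which in \eqref{D} is unconstrained with only $|w_i|\le V$ per weight; to keep the covering numbers finite I would lean on the standing assumption that $\mathcal{N}(\epsilon,D_1,\|\cdot\|)$ is finite and handle the second layer through the convex-hull route rather than naively summing $l$ terms. The other technical care is the truncation of the entropy integral at the diameter $V$ and the verification that the per-vertex $L_2(P_n)$-norms are controlled by $V$ on $[0,1]^n$; matching the exact cubic power of $V$ and the first power of $\log(2n+2)$, as opposed to the tighter $\sqrt{\log(2n+2)}$ that a direct vertex-Massart bound alone would give, is precisely where the constant $C_1$ and the looseness of the estimate are absorbed. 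As an alternative I could instead quote the two-layer covering-number estimates of \cite{Pc} and \cite{M2} directly and substitute them into Dudley's integral, which shortens the contraction and vertex steps at the cost of a less self-contained argument.
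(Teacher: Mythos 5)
Your route is genuinely different from the paper's. The paper's proof is exactly what you describe in your last sentence as the ``alternative'': it quotes Dudley's entropy-integral bound for $\mathcal{R}_n(D)$ and then plugs in the Anthony--Bartlett covering-number estimate for the \emph{whole} two-layer Lipschitz class, $\log N(\epsilon,D,\lVert\cdot\rVert)\le 50\,V^6\epsilon^{-4}\log(2n+2)$, and integrates $\epsilon^{-2}$; no peeling, no contraction, no vertex argument. Your structural decomposition --- peel the second layer by convex-hull invariance and positive homogeneity, remove $s_1$ by Talagrand contraction, and reduce to the scaled $\ell_1$-ball whose $2n+2$ extreme points are handled by Massart --- is more self-contained, makes the origin of the $\log(2n+2)$ factor transparent, and in fact yields the \emph{tighter} bound $CV^2\sqrt{\log(2n+2)}/\sqrt{n}$, which implies the stated one because $V\ge 1$ and $\log(2n+2)\ge 1$. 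It also silently repairs an issue both you and the paper inherit from \eqref{D}: as literally written ($\lvert w_i\rvert\le V$ per weight, $l$ unbounded) the class $D$ is unbounded and no such bound can hold; your convex-hull step amounts to reading the constraint as $\sum_i\lvert w_i\rvert\le V$, which is the reading under which the Anthony--Bartlett estimate applies as well.

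There is, however, one concrete flaw in your primary assembly step. Feeding the Maurey-type estimate $\log\mathcal{N}(\epsilon,A,\lVert\cdot\rVert_{L_2(P_n)})\lesssim (V^2/\epsilon^2)\log(2n+2)$ into the truncated Dudley integral gives
\begin{align*}
\frac{1}{\sqrt{n}}\int_\delta^V \frac{V\sqrt{\log(2n+2)}}{\epsilon}\,d\epsilon
=\frac{V\sqrt{\log(2n+2)}}{\sqrt{n}}\,\log\frac{V}{\delta},
\end{align*}
so the integration range contributes a factor $\log(V/\delta)$, \emph{not} a power of $V$; your ``three powers of $V$'' bookkeeping is wrong at exactly this point. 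After optimizing $\delta$, this route produces a bound of order $V^2\sqrt{\log(2n+2)}\,\log n/\sqrt{n}$, and since $\sqrt{\log(2n+2)}\,\log n$ grows like $(\log n)^{3/2}$ it is \emph{not} dominated by $C_1 V^3\log(2n+2)$ for fixed $V$ and large $n$. The fix is simply to drop the Dudley detour you grafted on: after peeling and contraction, use convex-hull invariance of Rademacher complexity and apply Massart directly to the $2n+2$ vertices (each of empirical $L_2$-norm at most $V$ on $[0,1]^n$), giving $\mathcal{R}_n(D)\le CV^2\sqrt{\log(2n+2)}/\sqrt{n}\le C V^3\log(2n+2)/\sqrt{n}$ --- the route you yourself flag as tighter. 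Two further small repairs: the bias terms do not contribute ``nothing'' (the constant class $\{w_0:\lvert w_0\rvert\le V\}$ has Rademacher complexity of order $V/\sqrt{n}$, which is lower order and absorbed into $C_1$), and contraction should be applied to $s_1-s_1(0)$, which is $1$-Lipschitz and vanishes at $0$, with the shift again absorbed as a constant class.
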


\begin{proof}
For $\sup\limits_{\theta\in \Theta}\lVert f\rVert_{2}$, the entropy integral bound for Rademacher Complexity:
\begin{align*}
    \mathcal{R}_{n}(D) \leq \inf_{0\leq\delta\leq\frac{1}{2}} \left[4\delta + \frac{12}{\sqrt{n}}\int_{1/2}^\delta \sqrt{\log N(\epsilon,D,\lVert . \rVert)} \, d\epsilon \right]
\end{align*}

Anthony and Bartlett (2009) state for the Lipschitz activation function,
\begin{align*}
    \log N(\epsilon,D,\lVert . \rVert) \leq 50 \frac{V^6}{\epsilon^4} \log(2n+2)
\end{align*}

Then,
\begin{align}\label{RF1}
    \mathcal{R}_{n}(D) &\leq \inf_{0\leq\delta\leq\frac{1}{2}} \left[4\delta + \frac{12\sqrt{50}V^3\log(2n+2)}{\sqrt{n}} \int_{1/2}^\delta \frac{1}{\epsilon^2} \, d\epsilon \right] \notag \\
    &\le \frac{C_{1}V^3\log(2n+2)}{\sqrt{n}},
\end{align}

for some universal constant $C_{1}\ge 0$. 
\end{proof}

\begin{remark}
	The result in this lemma provides an alternative bound on the Rademacher complexity for the discriminator class $D$ under the assumption that $s_1$ is $1$-Lipschitz continuous. Specifically, for a sufficiently large $V \geq 1$ and $\epsilon \leq V$, the bound incorporates a logarithmic dependence on the sample size $n$ and scales with $V^3$. This highlights that as the parameter $V$ grows, the complexity increases, but the dependence on $n$ diminishes at a rate of $1/\sqrt{n}$. The constant $C_{1}$ encapsulates any additional dependencies specific to the problem setup.
\end{remark}

\begin{lemma}\label{Lemma2}
Suppose $s_{1}$ and $s_{2}$ are  $1$-Lipschitz continous. Then for the discriminator and generator class defined in \eqref{D} and \eqref{G} the composition Rademacher complexity is 
\begin{align}
\mathcal{R}_{mn}(D \circ G)\le \frac{2V^4\sqrt{2ln\lvert G \rvert}}{n}
 \end{align}
\end{lemma}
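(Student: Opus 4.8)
The plan is to reproduce the peeling argument from the proof of Lemma~\ref{lemma1}, but to run it twice in sequence: once to strip the two weight layers of the discriminator $f\in D$ and once to strip the two weight layers of the generator $g\in G$. Writing the composed class as $D\circ G=\{f\circ g:f\in D,\ g\in G\}$ and using that the supremum over $h\in D\circ G$ equals the joint supremum over $(f,g)\in D\times G$, I would begin from
\begin{align*}
\mathcal{R}_{mn}(D\circ G)=\frac{2}{n}\,\mathbb{E}\left[\sup_{f\in D,\,g\in G}\sum_{i=1}^{n}\tau_i\,f\bigl(g(Z_i)\bigr)\right].
\end{align*}
Expanding $f$ according to \eqref{D}, each term has the form $f(g(Z_i))=\sum_a w_a\,s_1\!\left(\langle v,g(Z_i)\rangle+v_0\right)+w_0$, so the bound $|w_a|\le V$ lets me pull a factor $V$ out front and pass to a norm exactly as in Lemma~\ref{lemma1}, while the bias $w_0$ contributes nothing in expectation since $\mathbb{E}\tau_i=0$.

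Next, because $s_1$ is $1$-Lipschitz I would apply a contraction step to remove $s_1$ from inside the Rademacher sum, reducing the argument to the linear form $\langle v,g(Z_i)\rangle$; the bound $\sum|v_b|\le V$ then extracts a second factor $V$, leaving $\tfrac{2V^2}{n}\,\mathbb{E}\bigl[\sup_{g}\lvert\sum_i\tau_i\,g(Z_i)\rvert\bigr]$ up to the accounting of constants. This has precisely the shape controlled in Lemma~\ref{lemma1}, except that the inner function is now $g\in G$. I would then repeat the identical two steps on $g$: expand $g$ via \eqref{G} as $\sum_c r_c\,s_2(\langle p,Z_i\rangle+p_0)+r_0$, use $|r_c|\le V$ to pull out a third factor $V$, apply the $1$-Lipschitz contraction for $s_2$, and use $\sum|p_d|\le V$ to extract the fourth factor $V$. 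This leaves the plain Rademacher sum $\tfrac{2V^4}{n}\,\mathbb{E}\bigl[\sup\lvert\sum_i\tau_i Z_i\rvert\bigr]$, on which I invoke Massart's finite lemma exactly as in the final line of Lemma~\ref{lemma1}; since the remaining finitely many hypotheses are indexed by the generator class, this supplies the factor $\sqrt{2\ln|G|}$ and yields the claimed bound. The evenness of $D_1$ and $G_1$ is what lets me handle the absolute value inside the supremum without the factor-$2$ loss incurred in Lemma~\ref{lemma1}, which is consistent with the constant $2$ appearing in the statement.

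The main obstacle is the double Lipschitz contraction. In Lemma~\ref{lemma1} the activation acts directly on the data, whereas here $s_1$ acts on $g(Z_i)$, so I must justify that the contraction principle may be applied layer by layer while the supremum over the still-unfixed inner parameters (those of $g$) is retained, and that a second contraction for $s_2$ is then legitimate on the resulting reduced object. I must also keep the dimension of the generator output consistent with the discriminator input so that $\langle v,g(Z_i)\rangle$ is well defined throughout. Tracking which of the four weight-bounds supplies each factor of $V$, and confirming that the cardinality surviving Massart's lemma is $|G|$ rather than $|D|$ because the generator is the innermost network, are the bookkeeping points that require the most care.
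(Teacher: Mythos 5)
Your proposal follows essentially the same route as the paper's own proof: peel the discriminator's two weight layers (extracting $V^2$ via $\lvert w_i\rvert\le V$ and $\sum_i\lvert v_i\rvert\le V$ with a $1$-Lipschitz contraction to remove $s_1$), then peel the generator's two layers in the same way (extracting the remaining $V^2$ and removing $s_2$), and finish with Massart's finite lemma indexed by the generator class to obtain the factor $\sqrt{2\ln\lvert G\rvert}/n$. The bookkeeping concerns you raise (layer-by-layer contraction under the joint supremum, and the identification of indices when the inner sum collapses to $\sum_i\tau_i z_i$) are exactly the informal steps the paper itself glosses over, so your plan matches the paper's argument in both structure and level of rigor.
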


\begin{proof}
The proof of this lemma uses a similar
\begin{align*}
\mathcal{R}_{mn}(D \circ G)=&\frac{2}{n}\mathbb{E}\left[ \sup_{f\in\F}\sum\limits_{i=1}^{n}\tau_{i}f(\sum\limits_{j=1}^{k}r_{j}g_{j}+r_{0})\right]\\
&=\frac{2}{n}\mathbb{E}\left[ \sup_{f\in\F}\sum\limits_{i=1}^{n}\tau_{i}\sum_{i}^{l}w_{i} s_{1}\left(\sum_{i=1}^{n}v_{i}\left(\sum_{j=1}^{k}r_{j}g_{j}+r_{0}\right)+v_{0} \right)\right]\\
&\leq\frac{2V}{n}\mathbb{E}\left[ \sup_{f\in\F}\left\Vert\sum_{i=1}^{n}\tau_{i}s_{1}\left(\sum_{i=1}^{n}v_{i}\left(\sum_{j=1}^{k}r_{j}g_{j}+r_{0}\right)+v_{0} \right)\right\Vert_{1} \right]\\
&\leq\frac{2V}{n}\mathbb{E}\left[ \sup_{f\in D}\left\Vert\sum_{i=1}^{n}\tau_{i}s_{1}\left(\sum_{i=1}^{n}v_{i}\left(\sum_{j=1}^{k}r_{j}g_{j}+r_{0}\right)+v_{0} \right)\right\Vert_{\infty} \right]\\
&=\frac{2V}{n}\mathbb{E}\left[ \sup_{f\in\F} \max_{1\leq i \leq n} \left| \sum_{i=1}^{n}\tau_{i}s_{1}\left(\sum_{i=1}^{n}v_{i}\left(\sum_{j=1}^{k}r_{j}g_{j}+r_{0}\right)+v_{0} \right)\right|  \right]\\
&=\frac{2V}{n}\mathbb{E}\left[ \sup_{f\in\F} \left| \sum_{i=1}^{n}\tau_{i}s_{1}\left(\sum_{i=1}^{n}v_{i}\left(\sum_{j=1}^{k}r_{j}g_{j}+r_{0}\right)+v_{0} \right)\right|  \right]\\
&\leq \frac{2V}{n}\mathbb{E}\left[ \sup_{f\in\F}  \sum_{i=1}^{n}\tau_{i}s_{1}\left(\sum_{i=1}^{n}v_{i}\left(\sum_{j=1}^{k}r_{j}g_{j}+r_{0}\right)+v_{0} \right)  \right]\\
&\leq\frac{2V}{n}\mathbb{E}\left[ \sup_{f\in\F}  \sum_{i=1}^{n}\tau_{i} \left(\sum_{i=1}^{n}v_{i}\left(\sum_{j=1}^{k}r_{j}g_{j}+r_{0}\right)+v_{0} \right)  \right]\\
&\leq\frac{2V}{n}\mathbb{E}\left[ \sup_{f\in\F}  \sum_{i=1}^{n}\tau_{i} \left(\sum_{i=1}^{n}v_{i}\left(\sum_{j=1}^{k}r_{j}g_{j}+r_{0}\right) \right)  \right]+ \frac{4V}{n}\mathbb{E}\left[ \sup_{f\in\F} \sum_{i=1}^{n}\tau_{i}v_{0} \right]\\
&\leq \frac{2V^2}{n}\mathbb{E}\left[ \sup_{f\in\F} \sum_{i=1}^{n}\tau_{i} \left(\sum_{j=1}^{k}r_{j}g_{j}+r_{0}\right) \right]\\
&\leq \frac{2V^2}{n}\mathbb{E}\left[ \sup_{f\in\F} \sum_{i=1}^{n}\tau_{i} \left(\sum_{j=1}^{k}r_{j}s_{2}\left( \sum_{j=1}^{k}p_{j}z_{j}+p_{0}\right) +r_{0}\right) \right]\\
&\leq \frac{2V^4}{n}\mathbb{E}\left[ \sup_{f\in\F} \sum_{i=1}^{n}\tau_{i}z_{j} \right]\\
\text{in the case of i=j}\\
&\leq \frac{2V^4\sqrt{2ln\lvert G \rvert}}{n}
\end{align*}
\end{proof}

\begin{remark}
	Lemma \ref{Lemma2} provides a bound on the Rademacher complexity of the composition of the discriminator and generator classes, $D \circ G$. It highlights the dependence of the complexity on the parameters $V^4$ and $\lvert G \rvert$. Specifically, the factor $V^4$ reflects the impact of the bounded weight parameters in both the discriminator and generator classes, while the logarithmic dependence on $\lvert G \rvert$ captures the complexity of the generator class. The term $1/n$ signifies the expected reduction in complexity as the sample size increases, which is consistent with the intuition that larger datasets lead to better generalization. This result underscores the interaction between the Lipschitz continuity of the activation functions and the structural properties of the generator and discriminator in controlling the overall complexity of their composition.
\end{remark}

In the following corollaries we derive the bound by substituting in \eqref{E1}and \eqref{E2} in Theorem \ref{T1}.
\begin{corollary}
Suppose $s_{1}$ and $s_{2}: \mathbb{R}\rightarrow [0,1]$ are $1$-Lipschitz continuous. For $V\geq1$, let the discriminator and generator classes be defined as \eqref{D} and \eqref{G}. Then for $\epsilon\le V$,
\begin{align*}
d_{I}(\hat D,\hat G)-d_{I}(D,G)\leq \frac{4V^2\sqrt{2\ln\lvert D \rvert}}{n} + \frac{4V^4\sqrt{2\ln\lvert G \rvert}}{n} + 2Q_{x}\sqrt{\frac{\log(1/\delta)}{2n}} + 2Q_{z}\sqrt{\frac{\log(1/\delta)}{2m}}
\end{align*}
\end{corollary}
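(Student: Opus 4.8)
The plan is to obtain the corollary as a direct specialization of Theorem~\ref{T1}: the inequality \eqref{E1} already expresses $d_I(\hat D,\hat G)-d_I(D,G)$ in terms of the three Rademacher complexities $\mathcal{R}_n(D)$, $\mathcal{R}_{mn}(D\circ G)$, $\mathcal{R}_m(G)$ together with two finite-sample concentration terms, so it remains only to insert the two-layer estimates proved in Lemmas~\ref{lemma1} and \ref{Lemma2} and to dispose of the leftover quantities. First I would restate \eqref{E1} under the hypotheses $s_1,s_2:\mathbb{R}\to[0,1]$ being $1$-Lipschitz and $V\ge 1$, which are precisely the hypotheses under which those lemmas were established, so that both bounds apply verbatim to the classes $D$ and $G$ defined in \eqref{D} and \eqref{G}. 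The auxiliary condition $\epsilon\le V$ in the statement is vestigial here, since the Massart-type bounds of Lemmas~\ref{lemma1} and \ref{Lemma2} do not invoke the covering-number/entropy-integral route.

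The substitution then proceeds term by term. For the discriminator complexity I would use Lemma~\ref{lemma1}, $\mathcal{R}_n(D)\le \tfrac{4V^2\sqrt{2\ln|D|}}{n}$, to control the leading $2\mathcal{R}_n(D)$ contribution. For the composition I would use Lemma~\ref{Lemma2}, $\mathcal{R}_{mn}(D\circ G)\le \tfrac{2V^4\sqrt{2\ln|G|}}{n}$, so that $2\mathcal{R}_{mn}(D\circ G)\le \tfrac{4V^4\sqrt{2\ln|G|}}{n}$, which reproduces the second term of the claimed bound exactly. The concentration term $2Q_x\sqrt{\log(1/\delta)/(2n)}$ and the $m$-dependent concentration term are carried over directly from \eqref{E1}; and since the bound of Theorem~\ref{T1} holds with probability at least $1-2\delta$ over the joint draw of the $n$ discriminator samples and the $m$ generator samples, the probabilistic guarantee of the corollary inherits that of the theorem with no extra union bound.

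The step I expect to require the most care is the handling of the two terms of \eqref{E1} that do not appear with the same sign on the right-hand side of the corollary, namely $-2\mathcal{R}_m(G)$ and $-2Q_z(1+\lambda)\sqrt{\log(1/\delta)/(2m)}$. Since any Rademacher complexity is nonnegative, $-2\mathcal{R}_m(G)\le 0$, so this term may simply be discarded to obtain a valid upper bound; this is also why no separate generator-complexity estimate (an analogue of Lemma~\ref{lemma1} for $G$) is needed. Reconciling the finite-sample $m$-term is the genuinely delicate point: in \eqref{E1} it is subtracted and carries the factor $(1+\lambda)$, whereas the corollary records it as an added $+2Q_z\sqrt{\log(1/\delta)/(2m)}$, so I would re-examine the bound \eqref{T13} feeding into the proof of Theorem~\ref{T1} to fix the correct sign and coefficient. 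I would likewise audit the numerical constant in the first term, since a literal substitution of Lemma~\ref{lemma1} produces $\tfrac{8V^2\sqrt{2\ln|D|}}{n}$ rather than the stated $\tfrac{4V^2\sqrt{2\ln|D|}}{n}$; pinning down these constants and signs is the main obstacle, the remainder being a mechanical substitution.
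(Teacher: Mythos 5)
Your proposal is correct and follows essentially the same route as the paper, which proves this corollary simply by substituting the bounds of Lemmas~\ref{lemma1} and \ref{Lemma2} into \eqref{E1} of Theorem~\ref{T1} and discarding the nonpositive term $-2\mathcal{R}_m(G)$. The two discrepancies you flag are genuine defects of the paper's stated constants rather than of your argument: literal substitution of Lemma~\ref{lemma1} yields $\frac{8V^2\sqrt{2\ln|D|}}{n}$ for the first term (the corollary drops a factor of $2$ that it does retain in the composition term), and the $m$-dependent term enters \eqref{E1} as $-2Q_{z}(1+\lambda)\sqrt{\log(1/\delta)/(2m)}$, which is bounded above by the corollary's $+2Q_{z}\sqrt{\log(1/\delta)/(2m)}$, so the stated bound remains valid, only weaker.
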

\begin{remark}
	The corollary provides a bound on the difference between the discrepancy measures of the discriminator $\hat{D}$ and the generator $\hat{G}$, and their true counterparts $D$ and $G$. This bound is given in terms of the parameters $V$, the complexity of the discriminator and generator classes, and the sample sizes $n$ and $m$. The bound incorporates terms based on the Lipschitz continuity of the functions $s_1$ and $s_2$, the logarithmic cardinality of the discriminator and generator classes, and the confidence parameter $\delta$. The result suggests that as the sample sizes increase and the complexity of the discriminator and generator decrease, the discrepancy between the empirical and true models becomes smaller.
\end{remark}

\begin{corollary}
Suppose $s_{1}$ and $s_{2}: \mathbb{R}\rightarrow [0,1]$ are $1$-Lipschitz continuous. For $V\geq1$, let the discriminator and generator classes be defined as \eqref{D} and \eqref{G}. Then for $\epsilon\le V$,
\begin{align*}
d_{I}(\hat D,\hat G)-d_{I}(D,G) \leq \frac{C_{1}V^3\log(2n+2)}{\sqrt{n}} + 2Q_{x}\sqrt{\frac{\log(1/\delta)}{2n}} - 2Q_{z}(1-\lambda)\sqrt{\frac{\log(1/\delta)}{2m}}
\end{align*}
\end{corollary}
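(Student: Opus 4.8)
The plan is to specialize the general inequality \eqref{E1} of Theorem~\ref{T1} to the two-layer architecture of \eqref{D} and \eqref{G}, but to replace the Massart-type estimates of Lemma~\ref{lemma1} and Lemma~\ref{Lemma2} by the sharper Dudley entropy-integral estimate established in the (unlabeled) lemma immediately preceding this corollary, which yields $\mathcal{R}_n(D)\le C_1 V^3\log(2n+2)/\sqrt n$. Concretely I would start from the probability-$(1-2\delta)$ bound
\begin{align*}
d_{I}(\hat D,\hat G)-d_{I}(D, G) &\le 2\mathcal{R}_n(D) + 2\mathcal{R}_{mn}(D\circ G) - 2\mathcal{R}_m(G) \\
&\quad + 2Q_{x}\sqrt{\tfrac{\log(1/\delta)}{2n}} - 2Q_{z}(1+\lambda)\sqrt{\tfrac{\log(1/\delta)}{2m}},
\end{align*}
and then control each complexity term separately before merging them into the single expression appearing on the right-hand side of the corollary.

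First I would apply the entropy-integral lemma to the discriminator class, feeding the Anthony--Bartlett covering estimate $\log N(\epsilon, D, \|\cdot\|)\le 50\,V^6\epsilon^{-4}\log(2n+2)$ into $\mathcal{R}_n(D)\le \inf_{0\le\delta\le 1/2}\bigl[4\delta + \tfrac{12}{\sqrt n}\int_{\delta}^{1/2}\sqrt{\log N(\epsilon,D,\|\cdot\|)}\,d\epsilon\bigr]$, which reproduces $\mathcal{R}_n(D)\le C_1 V^3\log(2n+2)/\sqrt n$. Second, since Rademacher complexity is nonnegative, the term $-2\mathcal{R}_m(G)\le 0$ may be discarded while keeping a valid (weaker) upper bound. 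Third, I would argue that the composition term $2\mathcal{R}_{mn}(D\circ G)$ obeys a bound of the same functional form $C\,V^{p}\log(2n+2)/\sqrt n$: because $s_1$ and $s_2$ are $1$-Lipschitz and all weights are bounded by $V$, the composite class $D\circ G$ has a covering number that is again polynomial in $V$ and $1/\epsilon$ times $\log(2n+2)$, so the same Dudley integral converges to the claimed order. Enlarging the universal constant $C_1$ to absorb both $2\mathcal{R}_n(D)$ and $2\mathcal{R}_{mn}(D\circ G)$ collapses them into the single term $C_1 V^3\log(2n+2)/\sqrt n$, while the two concentration terms are carried through verbatim to produce the stated expression.

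The main obstacle is the covering-number estimate for the composite class $D\circ G$. Unlike the discriminator case, where the Anthony--Bartlett bound can be quoted directly, here one must track how the Lipschitz modulus of the outer activation $s_1$ interacts with the output range of the inner network $G$ and with the weight bound $V$, verifying that the effective Lipschitz constant of $x\mapsto f(g(x))$ stays controlled so that the covering number retains the form $(\text{poly in }V)\cdot\epsilon^{-4}\log(2n+2)$ and the entropy integral remains finite; one must also decide whether $m$ or $n$ governs the rate in $\mathcal{R}_{mn}(D\circ G)$. A secondary, more delicate point is the coefficient of the final term: propagating \eqref{E1} literally gives $-2Q_z(1+\lambda)$, whereas the statement records $-2Q_z(1-\lambda)$, so reconciling these requires either a sign correction in the generator concentration term or a reinterpretation of the regularization contribution, and I would flag this as the step most in need of careful checking.
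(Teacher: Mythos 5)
Your skeleton matches what the paper intends: the paper supplies no proof for this corollary at all, only the sentence that the corollaries follow ``by substituting \eqref{E1} and \eqref{E2} in Theorem \ref{T1}'', and your plan --- plug the entropy-integral estimate $\mathcal{R}_n(D)\le C_1V^3\log(2n+2)/\sqrt n$ into \eqref{E1} and discard $-2\mathcal{R}_m(G)\le 0$ --- is exactly that substitution, carried out more explicitly than the paper does. The genuine gap is the step you yourself flag as the main obstacle: absorbing $2\mathcal{R}_{mn}(D\circ G)$ into the single term $C_1V^3\log(2n+2)/\sqrt n$ by ``enlarging the universal constant''. This cannot work, because the composition class does not have $V^3$ scaling in any of the available estimates. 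An element of $D$ has Lipschitz constant of order $lV^2$ (second-layer weights bounded by $V$, inner weights summing to at most $V$, $s_1$ being $1$-Lipschitz), so any cover of $D\circ G$ assembled from covers of $D$ and $G$ must refine the cover of $G$ to resolution $\epsilon/L_f$ with $L_f=\mathrm{poly}(V)$; pushed through the Anthony--Bartlett exponent $\epsilon^{-4}$ and the Dudley integral, this yields a power of $V$ strictly larger than $3$. Indeed the paper's own Lemma \ref{Lemma2}, which treats precisely this composition under the same Lipschitz hypotheses, produces $V^4$, not $V^3$. Since $C_1$ is asserted to be universal and $V\ge 1$ is arbitrary, no universal constant can dominate a higher power of $V$, so the collapse into one $V^3$ term is false as stated. (To be fair, this defect is inherited from the paper: the printed corollary simply contains no term accounting for $\mathcal{R}_{mn}(D\circ G)$, and the paper never reconciles this either; but as a proof step, your absorption claim is the place where the argument breaks.)

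By contrast, the coefficient discrepancy you flag --- $-2Q_z(1+\lambda)$ in \eqref{E1} versus $-2Q_z(1-\lambda)$ in the statement --- is not actually an obstruction: for $\lambda\ge 0$ and $Q_z\ge 0$ one has $-(1+\lambda)\le -(1-\lambda)$, so the bound of \eqref{E1} implies the weaker displayed coefficient verbatim, with no sign correction or reinterpretation needed. You could have closed that point in one line rather than leaving it open. The irreparable issue in your write-up (and in the paper's own statement) is solely the unaccounted-for composition term.
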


\begin{remark}
	The corollary provides an upper bound on the difference between the discrepancy measures $d_I(\hat{D}, \hat{G})$ and $d_I(D, G)$ for the discriminator and generator. The bound depends on the parameters $V$, the sample sizes $n$ and $m$, and the confidence parameter $\delta$. The first term is proportional to the complexity of the discriminator and generator classes, scaled by the sample size $n$. The second term accounts for the empirical discrepancy with respect to the input distribution $Q_x$, while the third term incorporates the output distribution $Q_z$, adjusted by a factor $(1-\lambda)$. As $n$ and $m$ increase, the bound becomes tighter, implying that larger sample sizes lead to smaller discrepancies between the empirical and true models.
\end{remark}

\begin{corollary}
Suppose $s_{1}:\mathbb{R}\rightarrow [0,1]$ is $1$-Lipschitz continuous. For $V\geq1$, let the discriminator class be defined as \eqref{D}. Then for $\epsilon\leq V$,
\begin{align*}
d_{I}(\hat D, G)-d_{I}(D,G) \leq \frac{4V^2\sqrt{2\ln\lvert D \rvert}}{n} + 2Q_{x}\sqrt{\frac{\log(1/\delta)}{2n}} 
\end{align*}
\end{corollary}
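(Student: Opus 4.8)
The plan is to read off this bound as a direct specialization of Theorem \ref{T1} to the two-layer architecture, so that essentially no new machinery is required. Inequality \eqref{E2} already furnishes
\[
d_{I}(\hat D, G)-d_{I}(D, G)\le 2\mathcal{R}_{n}(D)+2Q_{x}\sqrt{\frac{\log(1/\delta)}{2n}},
\]
valid with probability at least $1-\delta$ over the draw of the i.i.d.\ sample $X_{1},\dots,X_{n}\sim p_{x}$. The second term on the right is already exactly the second term claimed in the corollary, so the whole argument reduces to replacing the abstract complexity $\mathcal{R}_{n}(D)$ by an explicit quantity adapted to the class \eqref{D}.

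First I would verify that the hypotheses of the corollary are precisely those of Lemma \ref{lemma1}: the activation $s_{1}:\mathbb{R}\to[0,1]$ is $1$-Lipschitz, the weight budget satisfies $V\ge 1$, and $D$ is the two-layer class defined in \eqref{D}. (The side condition $\epsilon\le V$ is inherited from the shared hypotheses of the companion corollaries and plays no active role here, since the bound of Lemma \ref{lemma1} is covering-number free.) I would then substitute the estimate $\mathcal{R}_{n}(D)\le \frac{4V^{2}\sqrt{2\ln\lvert D\rvert}}{n}$ from Lemma \ref{lemma1} into the displayed inequality to produce the stated right-hand side.

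The only point demanding care is the numerical constant in front of the complexity term. Substituting Lemma \ref{lemma1} verbatim into $2\mathcal{R}_{n}(D)$ would give leading constant $8V^{2}\sqrt{2\ln\lvert D\rvert}/n$, whereas the corollary records $4V^{2}\sqrt{2\ln\lvert D\rvert}/n$; the discrepancy is a factor of two traceable to whether the prefactor already present in the definition $\mathcal{R}_{n}(D)=\mathbb{E}[\sup_{f\in D}\tfrac{2}{n}\sum_{i}\tau_{i}f(X_{i})]$ is double-counted through \eqref{E2}. I would resolve this by fixing a single normalization convention and applying it consistently across \eqref{E2} and Lemma \ref{lemma1}; this bookkeeping, rather than any mathematical difficulty, is the only obstacle I anticipate. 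Finally, since the estimate invokes only the single concentration event for the $X$-sample and not the additional $z$-sample event required for \eqref{E1}, the confidence level remains $1-\delta$ rather than $1-2\delta$, which I would state explicitly to close the argument.
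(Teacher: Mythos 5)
Your proposal matches the paper's own (implicit) proof exactly: the paper obtains this corollary by substituting the bound of Lemma \ref{lemma1} into inequality \eqref{E2} of Theorem \ref{T1}, which is precisely your argument. Your factor-of-two remark is a legitimate catch rather than a flaw in your reasoning --- verbatim substitution gives $2\mathcal{R}_n(D)\le 8V^2\sqrt{2\ln\lvert D\rvert}/n$, not the stated $4V^2\sqrt{2\ln\lvert D\rvert}/n$, and the mismatch indeed traces to the paper keeping the prefactor $\tfrac{2}{n}$ inside its definition of $\mathcal{R}_n(D)$ while still writing $2\mathcal{R}_n(D)$ in \eqref{E2}; your observation that confidence $1-\delta$ (rather than $1-2\delta$) suffices for \eqref{E2} is likewise correct.
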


\begin{remark}
	The corollary provides a bound on the difference between the discrepancy measures $d_I(\hat{D}, G)$ and $d_I(D, G)$ for the discriminator and generator. The bound involves the complexity of the discriminator class (captured by $|D|$) and the sample size $n$. The first term reflects the impact of the discriminator class complexity, while the second term involves the empirical discrepancy with respect to the input distribution $Q_x$. The result suggests that as the sample size increases, the discrepancy between the empirical discriminator and the true generator decreases. This is particularly useful for ensuring the quality of the discriminator in adversarial settings.
\end{remark}

\begin{corollary}
Suppose $s:\mathbb{R}\rightarrow [0,1]$ is $1$-Lipschitz continuous. For $V\geq1$, let $D$ be given in \eqref{D}. Then for $\epsilon\leq V$,
\begin{align*}
d_{I}(\hat D, G)-d_{I}(D,G) \le \frac{C_{1}V^3\log(2n+2)}{\sqrt{n}} + 2Q_{x}\sqrt{\frac{\log(1/\delta)}{2n}}.
\end{align*}
\end{corollary}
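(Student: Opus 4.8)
The plan is to obtain this bound as an immediate consequence of the discriminator-only inequality \eqref{E2} in Theorem~\ref{T1}, combined with the entropy-integral Rademacher estimate proved earlier in this subsection. Because the quantity $d_I(\hat D, G) - d_I(D, G)$ compares only the empirical discriminator against the population generator, no generator-side or composition complexity enters the picture, and I would work entirely from \eqref{E2}. Here the activation function $s$ in the statement plays exactly the role of $s_1$ in the earlier Lipschitz lemma, so all its hypotheses transfer directly.

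First I would write down \eqref{E2}, namely
\begin{align*}
d_I(\hat D, G) - d_I(D, G) \le 2\mathcal{R}_n(D) + 2Q_x\sqrt{\frac{\log(1/\delta)}{2n}},
\end{align*}
which isolates the single term $\mathcal{R}_n(D)$ that remains to be controlled. Second, I would invoke the entropy-integral bound for the $1$-Lipschitz case: for $V \ge 1$ and $\epsilon \le V$ one has $\mathcal{R}_n(D) \le C_1 V^3 \log(2n+2)/\sqrt{n}$ for a universal constant $C_1 \ge 0$. Substituting this estimate for $\mathcal{R}_n(D)$ produces a leading term of the form $2C_1 V^3 \log(2n+2)/\sqrt{n}$.

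The final step is purely cosmetic: since $C_1$ is a universal constant, the factor of $2$ is absorbed by relabeling $2C_1$ as $C_1$, giving the claimed form
\begin{align*}
d_I(\hat D, G) - d_I(D, G) \le \frac{C_1 V^3 \log(2n+2)}{\sqrt{n}} + 2Q_x\sqrt{\frac{\log(1/\delta)}{2n}}.
\end{align*}
There is no genuine mathematical obstacle here—the result is a direct specialization of the Lipschitz entropy bound to the discriminator-only inequality \eqref{E2}, and it is essentially the earlier Lipschitz corollary with the generator terms removed. The only point demanding care is the consistency of the universal constant across the several Lipschitz corollaries, since each silently absorbs its own numerical prefactor into the same symbol $C_1$; I would therefore state explicitly that $C_1$ is allowed to change from line to line, or else track the prefactors honestly if a sharp constant were desired.
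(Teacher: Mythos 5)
Your proposal is correct and follows exactly the route the paper intends: substitute the entropy-integral Rademacher bound $\mathcal{R}_n(D) \le C_1 V^3 \log(2n+2)/\sqrt{n}$ from the second Lipschitz lemma into the discriminator-only inequality \eqref{E2} of Theorem \ref{T1}. Your explicit remark that the factor of $2$ from $2\mathcal{R}_n(D)$ must be absorbed into the universal constant $C_1$ is a point the paper glosses over, and handling it honestly is a small improvement in rigor rather than a deviation.
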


\begin{remark}
	The corollary establishes an upper bound on the difference between the empirical and true discrepancy measures, $d_I(\hat{D}, G)$ and $d_I(D, G)$, respectively. This bound depends on the sample size $n$, the Lipschitz constant $V$, and the confidence parameter $\delta$. The first term in the bound involves a factor that scales with $V^3$, the logarithm of the sample size $n$, and inversely with the square root of $n$. The second term reflects the empirical discrepancy with respect to the input distribution $Q_x$. As $n$ increases, the bound becomes tighter, indicating that the empirical discriminator's performance improves with more data. Additionally, larger values of $V$ lead to a larger bound, suggesting that the complexity of the discriminator class affects the discrepancy.
\end{remark}

\subsection{Bounding for Non-Decreasing Activation Functions}

In this section, we explain the technique for bounding the equations denoted by \eqref{er1} and \eqref{er2} in the case of a non-decreasing activation function. The methodology involves leveraging the Rademacher complexity of a function class $D$, which is constrained by the Dudley entropy integral as elucidated in reference \cite{R1}. This is subsequently combined with the bounding derived from the covering number from \cite{M1}. A similar approach is adopted to bound equation \eqref{er2}.

\begin{corollary}\label{Cor3.4}
Assuming a non-decreasing function $s_{1}:\mathbb{R}\rightarrow [0,1]$ and $V\geq1$, let the discriminator class $D$ be defined as in \eqref{D}. For $\epsilon\leq V$:
\begin{align}\label{3.4}
d(\hat D, G)-d(D,G)\leq CV\sqrt{\frac{n+3}{n}\log \frac{n}{n+1}}+ 2Q_{x}\sqrt{\frac{2\log(1/\delta)}{n}}.
\end{align}
\end{corollary}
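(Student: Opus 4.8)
The plan is to reduce the statement to a bound on the Rademacher complexity $\mathcal{R}_n(D)$ and then estimate that complexity by an entropy integral adapted to monotone activations. First I would invoke inequality \eqref{E2} of Theorem \ref{T1}, which already isolates the discriminator and gives
\[
d(\hat D, G)-d(D,G)\le 2\mathcal{R}_n(D)+2Q_x\sqrt{\frac{\log(1/\delta)}{2n}}.
\]
The second summand reproduces the confidence term of \eqref{3.4} up to the displayed constant, so the entire task collapses to bounding $2\mathcal{R}_n(D)$ under the weaker hypothesis that $s_1$ is non-decreasing rather than Lipschitz.

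To control $\mathcal{R}_n(D)$ I would apply the Dudley entropy integral as in \cite{R1},
\[
\mathcal{R}_n(D)\le \inf_{0\le\delta\le 1/2}\left[4\delta+\frac{12}{\sqrt{n}}\int_{\delta}^{1/2}\sqrt{\log N(\epsilon,D,\lVert\cdot\rVert)}\,d\epsilon\right],
\]
and then substitute the covering-number estimate of \cite{M1} that is valid for a non-decreasing $s_1$. The key point is that Lemma \ref{lemma1} is unavailable here: without Lipschitz continuity the contraction step and the per-coordinate peeling fail, so instead I rely on the monotonicity of $s_1$ together with the weight budget $\sum_i\lvert v_i\rvert\le V$ to bound $\log N(\epsilon,D,\lVert\cdot\rVert)$ through the combinatorial (pseudo-dimension / monotone-class) structure of $D$, which makes the covering number grow only polynomially in $n$ and in $V/\epsilon$. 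Inserting this estimate, integrating over $\epsilon$, and choosing the truncation level $\delta$ that minimizes the bracket produces the leading term $CV\sqrt{\tfrac{n+3}{n}\log\tfrac{n}{n+1}}$; adding the confidence term then recovers \eqref{3.4}.

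The hard part will be the covering-number step. First I must pin down exactly which entropy bound applies when $s_1$ is only monotone, since the tools used in the Lipschitz lemmas (Massart's finite lemma and contraction) no longer give a finite count directly; the argument must instead exploit that monotone $[0,1]$-valued functions evaluated on $n$ sample points generate only polynomially many distinguishable patterns. Second, once that estimate is fixed, the Dudley integral has to be evaluated in closed form and the infimum over $\delta$ carried out carefully so that the precise $n$-dependence, and not merely the $O(V/\sqrt{n})$ rate, emerges from the computation. Matching that exact algebraic form is where the delicacy lies, after which assembling the bound on $2\mathcal{R}_n(D)$ with the confidence term is routine.
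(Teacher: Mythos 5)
Your proposal follows exactly the paper's own route: inequality \eqref{E2} from Theorem \ref{T1} isolates the discriminator term, Dudley's entropy integral from \cite{R1} controls $\mathcal{R}_n(D)$, and the covering-number bound of \cite{M1} (Corollary 14.15) for non-decreasing activations, namely $\log N(\epsilon,D,\lVert\cdot\rVert)\le 5\,\frac{V^2(n+3)}{\epsilon^2}\ln\bigl(\frac{4etV}{\epsilon(n+1)}\bigr)$, is what the paper plugs into the integral before optimizing over the truncation level to obtain the leading term $CV\sqrt{\frac{n+3}{n}\log\frac{n}{n+1}}$. Your plan is correct and matches the paper's proof in both decomposition and key ingredients (you even correctly flag the constant-factor discrepancy between the confidence term in \eqref{E2} and the one displayed in \eqref{3.4}), so there is nothing substantive to add.
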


\begin{proof}
Utilizing the Rademacher complexity bound \cite{R1} for $\sup\limits_{\theta\in \Theta}\lVert f\rVert_{2}$:
\begin{align*}
\mathcal{R}_{n}(D)\leq\inf_{0\leq\delta\leq\frac{1}{2}}\left[4\delta+\frac{12}{\sqrt{n}}\int_{1/2}^\delta \sqrt{\log N(\epsilon,D,\lVert.\rVert)}  d\epsilon \right].
\end{align*}
Furthermore, according to \cite{M1}, Corollary 14.15, for $0\le \epsilon\le 1$ and $n+1\le t$:
\begin{align*}
\log N(\epsilon,D,\lVert.\rVert)\leq5 \frac{V^2 (n+3)}{\epsilon^2}\ln\left( \frac{4etV}{\epsilon(n+1)} \right). 
\end{align*}
Hence, we have:
\begin{align}\label{RD}
\mathcal{R}_{n}(D)&\leq\inf_{0\leq\delta\leq\frac{1}{2}}\left[4\delta+\frac{12\sqrt{5}V\sqrt{n+3}}{\sqrt{n}}\int_{1/2}^\delta \frac{1}{\epsilon} \sqrt{\ln \frac{4etV}{\epsilon(n+1)}}d\epsilon \right], \notag\\
&\leq CV\sqrt{\frac{n+3}{n}\log \frac{n}{n+1}}
\end{align}
where $C\ge 0$ is a universal constant. By substituting \eqref{RD} into Theorem \eqref{T1}, the proof is concluded.
\end{proof}
\begin{remark}
	 The bound in Corollaries \ref{Cor3.4} depends on the sample size \(n\), the Lipschitz constant \(V\), and the confidence parameter \(\delta\). The first term in the bound scales with \(V\) and involves a factor \(\sqrt{\frac{n+3}{n}}\), which reflects the relationship between the sample size and the complexity of the discriminator class, adjusted by the logarithmic term \(\log \frac{n}{n+1}\). This term suggests that as the sample size increases, the discrepancy between the empirical and true models becomes smaller.  This corollary is useful in understanding how the complexity of the discriminator and the sample size impact the performance of the discriminator in adversarial settings, especially in situations where the discriminator class is non-decreasing and bounded within \([0,1]\).
\end{remark}

In a parallel proof technique to \eqref{RD}, the Rademacher bound of the generator class can be expressed as:
\begin{align}\label{RG}
\mathcal{R}_{m}(G)\leq CV\sqrt{\frac{m+3}{m}\log \frac{m}{m+1}}
\end{align}

\begin{corollary}\label{Cor3.5}
For non-decreasing functions $s_{1}$ and $s_{2}: \mathbb{R}\rightarrow [0,1]$, and $V\geq1$, considering the definitions of discriminator and generator classes in \eqref{D} and \eqref{G}, and $\epsilon\le V$:
\begin{align}\label{3.5}
d(\hat D,\hat G)-d(D,G)\leq CV\sqrt{\frac{n+3}{n}\log \frac{n}{n+1}}+2Q_{x} \sqrt{\frac{\log(1/\delta)}{2n}} -2Q_{z}(1+\lambda)\sqrt{\frac{\log(1/\delta)}{2m}}.
\end{align}
\end{corollary}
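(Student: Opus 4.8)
The plan is to substitute the non-decreasing Rademacher bounds into inequality \eqref{E1} of Theorem \ref{T1}, mirroring the discriminator-only argument of Corollary \ref{Cor3.4}. Since \eqref{E1} controls $d_I(\hat D,\hat G)-d_I(D,G)$ by the three Rademacher terms $2\mathcal{R}_n(D)+2\mathcal{R}_{mn}(D\circ G)-2\mathcal{R}_m(G)$ together with the two finite-sample deviation terms, it suffices to bound each Rademacher term under the hypothesis that $s_1$ and $s_2$ are non-decreasing.

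First I would reuse the estimate already established inside the proof of Corollary \ref{Cor3.4}: feeding the covering-number bound of \cite{M1}, Corollary 14.15, into the Dudley entropy integral of \cite{R1} gives the discriminator bound \eqref{RD}, namely $\mathcal{R}_n(D)\le CV\sqrt{\frac{n+3}{n}\log\frac{n}{n+1}}$. The generator term is handled next: because Rademacher complexity is nonnegative, the contribution $-2\mathcal{R}_m(G)$ is at most zero and may be discarded for an upper bound, while the parallel estimate \eqref{RG} shows it is in any case of the same order in $m$ and hence never enlarges the leading term.

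The step I expect to be the main obstacle is the composition term $\mathcal{R}_{mn}(D\circ G)$, since here monotonicity alone, without an explicit Lipschitz constant, must be used to control how a covering of the inner generator class propagates through the outer discriminator network. I would first cover the generator class $G$ to resolution $\epsilon'$ and the discriminator class $D$ to resolution $\epsilon''$, then exploit that every $f\in D$ and $g\in G$ is a bounded two-layer network with $\ell_1$-weight budget $V$ and activations valued in $[0,1]$, so that the composed maps remain uniformly bounded and a covering of the factors yields a covering of $D\circ G$. The delicate points are to show that the resulting covering number is again of the $V^2(n+3)/\epsilon^2$ order appearing in \cite{M1}, Corollary 14.15, and that the Dudley integral therefore produces a term of the same form $CV\sqrt{\frac{n+3}{n}\log\frac{n}{n+1}}$.

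Finally I would collect the pieces: the discriminator and composition complexities are both of the common order $CV\sqrt{\frac{n+3}{n}\log\frac{n}{n+1}}$ and are absorbed into a single such term with a suitable universal constant $C$, the generator complexity is dropped as nonpositive, and the two finite-sample deviation terms $2Q_x\sqrt{\frac{\log(1/\delta)}{2n}}$ and $-2Q_z(1+\lambda)\sqrt{\frac{\log(1/\delta)}{2m}}$ are carried over unchanged from \eqref{E1}. This yields precisely the claimed inequality \eqref{3.5}.
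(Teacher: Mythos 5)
Your overall skeleton matches the paper's: start from \eqref{E1} of Theorem \ref{T1}, reuse the discriminator bound \eqref{RD} from Corollary \ref{Cor3.4}, and carry the two deviation terms over unchanged. Your treatment of the generator term is in fact cleaner than the paper's: since the classes contain the zero function, $\mathcal{R}_m(G)\geq 0$, so $-2\mathcal{R}_m(G)$ may simply be discarded, whereas the paper substitutes the \emph{upper} bound \eqref{RG} into a negatively-signed term, which is not a valid step.

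The genuine gap is in your handling of the composition term $\mathcal{R}_{mn}(D\circ G)$, and it is twofold. First, your covering argument does not go through under the stated hypotheses: to turn an $\epsilon'$-cover of $G$ and an $\epsilon''$-cover of $D$ into a cover of $D\circ G$, you need the outer functions to map nearby inner functions to nearby compositions, i.e.\ some modulus of continuity for $f\in D$. With $s_1$ merely non-decreasing (the entire point of this subsection is to drop the Lipschitz assumption), $\lvert s_1(a)-s_1(b)\rvert$ can equal $1$ for arbitrarily small $\lvert a-b\rvert$, so closeness of $g$ and $g'$ gives no control on $f\circ g-f\circ g'$; uniform boundedness of the composed maps does not rescue this. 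The paper avoids the issue entirely by treating $D\circ G$ as a single network class and applying the Dudley entropy integral of \cite{R1} together with the covering-number bound of \cite{M1}, Corollary 14.15, directly to that class. Second, even granting your covering claim, the composition class is evaluated on the $m$ generator samples $z_j$, so the empirical covering numbers and the $1/\sqrt{\cdot}$ prefactor in the Dudley integral are indexed by $m$, not $n$; this is precisely what the paper's proof emphasizes, and why its bound \eqref{RDG} reads $CV\sqrt{\tfrac{m+3}{m}\log\tfrac{m}{m+1}}$. Your claim that the composition term takes the form $CV\sqrt{\tfrac{n+3}{n}\log\tfrac{n}{n+1}}$ and can be absorbed into the discriminator term is therefore unjustified; with the correct $m$-dependence your argument leaves an extra $m$-dependent Rademacher term that does not appear in \eqref{3.5}, and removing it would require the paper's (itself questionable) cancellation against $-2\mathcal{R}_m(G)$, which you have already given away by dropping that term.
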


\begin{proof}
Considering that $D$ and $G$ are two-layer networks as defined in \eqref{D} and \eqref{G}, with sample sizes $n$ and $m$ respectively, the sample size of the composition $D \circ G$ depends on the sample size of the input $z$ to network $G$, not on the intermediate output of $G$. Thus, the sample size of $D \circ G$ is $m$.

The Rademacher complexity bound is given in \cite{R1} as:
\begin{align*}
\mathcal{R}_{mn}(D\circ G)\leq\inf_{0\leq\delta\leq\frac{1}{2}}\left[4\delta+\frac{12}{\sqrt{m}}\int_{1/2}^\delta \sqrt{\log N(\epsilon,D\circ G,\lVert.\rVert)}  d\epsilon \right].
\end{align*}
According to \cite{M1}, Corollary 14.15, for the non-decreasing activation functions $s_{1}$ and $s_{2}$, $0\le \epsilon\le 1$ and $m+1\le t$:
\begin{align*}
\log N(\epsilon,D\circ G,\lVert.\rVert)\leq5 \frac{V^2 (m+3)}{\epsilon^2}\ln\left( \frac{4etV}{\epsilon(m+1)} \right). 
\end{align*}
Thus, we have:
\begin{align}\label{RDG}
\mathcal{R}_{mn}(D\circ G)&\leq\inf_{0\leq\delta\leq\frac{1}{2}}\left[4\delta+\frac{12\sqrt{5}V\sqrt{m+3}}{\sqrt{m}}\int_{1/2}^\delta \frac{1}{\epsilon} \sqrt{\ln \frac{4etV}{\epsilon(m+1)}}d\epsilon \right], \notag\\
&\leq CV\sqrt{\frac{m+3}{m}\log \frac{m}{m+1}}.
\end{align}
where $C\ge 0$ is a universal constant. By substituting the inequalities \eqref{RDG}, \eqref{RD}, and \eqref{RG} into Theorem \eqref{T1}, the proof is concluded.
\end{proof}
\begin{remark}
	 The bound depends on the sample sizes \( n \) and \( m \), the Lipschitz constant \( V \), the confidence parameter \( \delta \), and the distribution parameters \( Q_x \) and \( Q_z \). 
	
	The first term in the bound scales with \( V \) and incorporates a factor \( \sqrt{\frac{n+3}{n}} \), which adjusts for the sample size, along with a logarithmic term \( \log \frac{n}{n+1} \). This term suggests that the discrepancy decreases as the sample size increases, though it is influenced by the complexity of the discriminator class. The second term accounts for the empirical discrepancy with respect to the constant \( Q_x \), while the third term incorporates the constant \( Q_z \), adjusted by a factor \( (1+\lambda) \). 
	
	As the sample sizes \( n \) and \( m \) increase, the bound becomes tighter, implying that larger sample sizes lead to smaller discrepancies between the empirical and true models. Additionally, the complexity of the discriminator and generator (affected by \( V \) and \( \lambda \)) plays an important role in determining the bound.
\end{remark}

\section{Conclusion}
This paper demonstrates that the generalization bound of InfoGAN can be formulated as the difference between the objective function with a regularized generator, without employing a latent code. The bound is obtained by taking the difference of two objective functions when utilizing both Lipschitz and non-decreasing activation functions in a two-layer network. The Rademacher complexity bound plays a crucial role in establishing the result, which is later bounded in the case of Lipschitz and non-decreasing activation functions. Investigating a similar property in the context of the lower bound of the regularized objective function presents a potential direction for future research.

 \end{document}